\documentclass[runningheads]{llncs}
\usepackage[T1]{fontenc}
\usepackage{graphicx}
\usepackage{color}
\usepackage{lipsum}            
\usepackage{float}             
\usepackage{wrapfig}           
\usepackage{xcolor}            
\usepackage{url}               
\usepackage{hyperref}          
\usepackage{booktabs}          
\usepackage{multirow}
\usepackage{amsmath, amssymb}
\usepackage{mathtools}         
\usepackage{nicefrac}          
\usepackage{microtype}         
\usepackage{type1cm}           
\usepackage{bm}                
\usepackage{dsfont}            
\usepackage{cases}             
\usepackage{array}             

\begin{document}
\title{Symmetry-quotient Flatness and Generalization}
%
%
\author{Taiki Miyagawa \orcidID{0000-0001-7651-6706}}
\authorrunning{T. Miyagawa}
%
\institute{NEC Corporation, Japan
\email{miyagawataik@nec.com}\\
\url{https://sites.google.com/view/taiki-miyagawa-kanaheinousagi}}
\maketitle              
\begin{abstract}
This paper develops a theorem-level pipeline in symmetry-quotient settings: quotient linear stability implies quotient flatness, quotient flatness implies input smoothness, and input smoothness yields generalization under local covering assumptions.
Flatness is often associated with generalization, and Stochastic Gradient Descent (SGD) is frequently viewed as implicitly biased toward flat solutions.
However, standard flatness measures are typically defined in the raw parameter space and are therefore not invariant under function-preserving symmetries such as positive rescaling.
We develop a symmetry-aware theory of quotient flatness, quotient linear stability, input smoothness, and generalization on quotient spaces of neural-network parameters.
For square loss and models equipped with function-preserving group actions, we define \textit{quotient flatness} as the trace of the Hessian of the empirical loss on the regular quotient manifold.
We show that quotient flatness controls input smoothness through a quotient-space analogue of the flatness-to-smoothness argument.
We also prove that one-step mean-square quotient linear stability of the linearized SGD dynamics implies an explicit quotient-flatness bound in terms of the batch size and learning rate, and extend this analysis to higher-order tensor moments.
Finally, under local covering and boundedness assumptions, we derive population generalization bounds in terms of quotient flatness and, consequently, in terms of quotient linear stability.

\keywords{Flatness  \and Input smoothness \and Generalization.}
\end{abstract}

\section{Introduction}
Stochastic Gradient Descent (SGD) often finds solutions that generalize well in highly over-parameterized and nonconvex neural-network models. A widely held explanation is that SGD is biased toward flatter minima and that flatter minima generalize better \cite{dziugaite2017computing,hochreiter1997flat,jiang2020fantastic,keskar2017large,ma2021linear}. Despite the influence of this view, two conceptual issues remain unresolved.

The first issue is that the mechanism connecting flatness to generalization is not always mathematically explicit. A particularly clear theoretical route was established by \cite{ma2021linear}. Using the multiplicative structure of the first layer, they showed that parameter-space flatness, and more generally linear stability properties of SGD, imply control of input-side Sobolev seminorms. Under suitable covering assumptions on the data distribution, these smoothness estimates yield generalization bounds. Their analysis therefore provides an explicit pipeline from optimization-side stability and geometry to function-side regularity and generalization. However, this pipeline has not been formulated in a symmetry-quotient geometry.

The second issue is that flatness measured in the raw parameter space is generally not invariant under function-preserving symmetries. Neural networks commonly admit such symmetries, including positive rescaling of hidden units and permutations within a hidden layer. As emphasized by \cite{pittorino2022deep}, these symmetries can make parameter-space distance and flatness measures imprecise or misleading. Consequently, even when flatness appears to correlate with generalization, it remains unclear whether the relevant quantity is raw parameter-space geometry or a symmetry-reduced one. A qualitative visualization of this distinction is shown in Fig.~\ref{fig:raw-vs-quotient-landscape}. These observations lead to the following question: \textit{Can one formulate a mathematically explicit flatness-to-generalization theory in which flatness is defined only after quotienting out function-preserving symmetries?}

In this paper, we answer this question in the affirmative. Our central idea is to replace raw parameter-space flatness with \textit{quotient flatness}, defined on a quotient manifold obtained by removing function-preserving symmetries. We show that this quantity admits a natural Hessian-trace characterization and supports a symmetry-aware analogue of the flatness-to-smoothness-to-generalization pipeline \cite{ma2021linear}. More precisely, we consider models with a smooth function-preserving group action and study the empirical loss on the corresponding regular quotient manifold. For interpolation solutions under the square loss, we prove that quotient flatness equals the average squared norm of the quotient gradients evaluated on the training data. This identity is the basic geometric bridge for the subsequent analysis.

We next prove that quotient flatness controls \textit{input smoothness}, not quotient stability. This step relies on two structural ingredients: a multiplicative first-layer representation of the form
$
f(x,W)=\tilde f(W_1x,W_2),
$
and a local gauge-fixing section that allows derivatives with respect to first-layer parameters to be compared with quotient derivatives. This yields a symmetry-aware counterpart of the flatness-to-smoothness argument of \cite{ma2021linear}.

We then connect quotient flatness to the local behavior of SGD. We define quotient linearized SGD on the tangent space of the quotient manifold at an interpolation solution and prove that one-step mean-square quotient linear stability implies an explicit upper bound on quotient flatness. Thus, the stability result has the direction
$
\text{quotient linear stability}
\Rightarrow
\text{quotient flatness},
$
rather than the reverse. We further derive tensor recursions for higher-order quotient moments, which yield higher-order input-smoothness estimates.

Finally, under local covering assumptions on the data distribution together with local boundedness conditions around the training data, we obtain generalization bounds from quotient flatness and, by combining the preceding stability-to-flatness result, from quotient linear stability. Taken together, these results establish the theorem-level pipeline
$
\text{quotient linear stability}
\Rightarrow
\text{quotient flatness}
\Rightarrow
\text{input smoothness}
\Rightarrow
\text{generalization}.
$
Our contribution is therefore not merely to redefine flatness in a symmetry-invariant way, but to show that the optimization-to-generalization mechanism can be reconstructed on symmetry-reduced geometry. In this sense, the paper provides a mathematically explicit symmetry-aware version of the flatness narrative.

\begin{figure}[tb]
    \begin{center}
    \includegraphics[width=\textwidth]{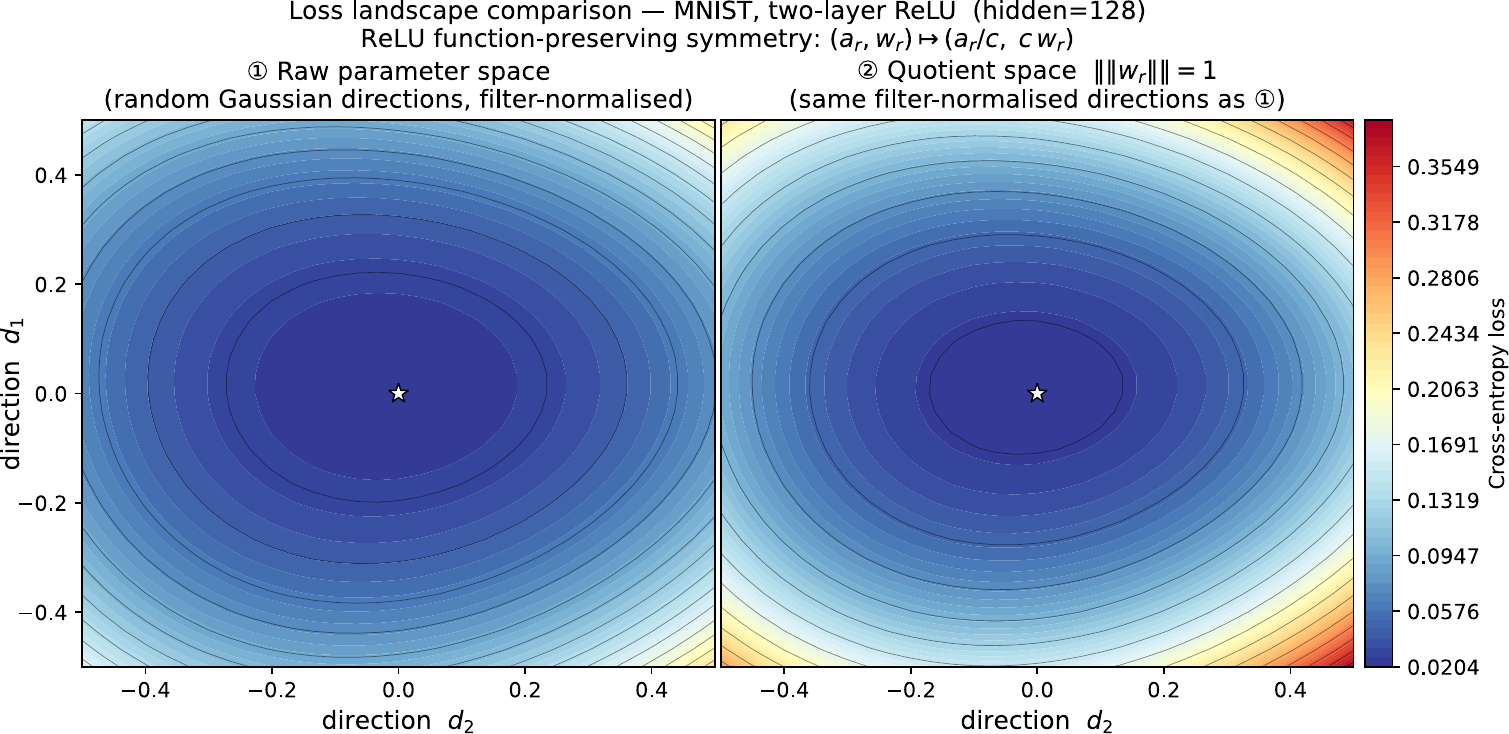}
    \end{center}
    \caption{
        \textbf{Raw versus symmetry-quotient loss landscapes for two-layer ReLU network trained on MNIST.}
        The model has hidden width $128$ and admits the function-preserving positive-rescaling symmetry
        $(a_r,w_r)\mapsto (a_r/c,cw_r)$.
        The left panel shows a two-dimensional cross-entropy loss slice in the raw parameter space along random filter-normalized directions, whereas the right panel shows the corresponding quotient-space slice after removing the scale degree of freedom.
        The star denotes the trained solution.
        The figure illustrates how quotienting removes representation-dependent scaling effects from local loss-geometry visualization.
    } \label{fig:raw-vs-quotient-landscape}
\end{figure}

\section{Related Work} \label{sec:related-work}
\subsection{Flatness, Generalization, and Implicit Regularization}
The link between flat minima and generalization has been studied since early work on minimum description length and PAC-Bayes perspectives \cite{hochreiter1997flat,dziugaite2017computing}. Empirical studies further show that SGD often selects solutions that generalize better when they are flatter under suitable local measures \cite{keskar2017large,jiang2020fantastic}, motivating flatness-aware optimization, margin-based views, and implicit-regularization analyses \cite{foret2021sharpnessaware,baldassi2021unveiling,wu2023implicit}. However, naively defined parameter-space flatness is representation-dependent: rescaling can change sharpness without changing the implemented function \cite{tsuzuku2020normalized}.

A key theoretical route was developed by \cite{ma2021linear}. They showed that, through the multiplicative structure of the first layer, parameter-space flatness and higher-order linear stability of SGD control input-side Sobolev seminorms, which in turn yield generalization bounds under distributional covering assumptions. Our work follows this flatness-to-smoothness-to-generalization pipeline, but reformulates it on a symmetry-reduced quotient loss geometry.

\subsection{Symmetry, Quotient Geometry, and Function-space Viewpoints}
Neural networks admit function-preserving symmetries, including positive rescaling in ReLU-type networks and hidden-unit permutations \cite{neyshabur2015path,meng2018gsgd,grigsby2023hidden}. These symmetries make raw parameter-space distance, connectivity, and flatness difficult to interpret. Fig.~\ref{fig:raw-vs-quotient-landscape} illustrates this issue: the raw slice contains redundant scale directions, whereas the quotient slice removes them.

This problem was emphasized by \cite{dinh2017sharp}, who showed that sharpness can be changed by reparameterization. \cite{pittorino2022deep} gave an empirical symmetry-reduced treatment by normalizing hidden units, aligning permutation-equivalent networks, and studying flat regions and geodesic paths after symmetry removal. Their results suggest that meaningful geometric structure emerges only after quotienting out representational redundancy. Building on this viewpoint, we define quotient flatness intrinsically on a quotient manifold and prove theorem-level links from quotient flatness to quotient stability, input smoothness, and generalization.

\section{Preliminaries} \label{sec:setup}
Let $\Theta$ be a smooth parameter manifold, let $\mathcal X \subset \mathbb R^d$ be the input space, and let $f_\theta : \mathcal X \to \mathbb R$ be a scalar-output model. Let $G$ be a Lie group or a finite group acting smoothly on $\Theta$. We assume that the action is function-preserving, namely
$
f_{g\cdot \theta}(x)=f_\theta(x)
$
for every $\theta \in \Theta$, $g\in G$, and $x\in\mathcal X$.
Let $\Theta_{\mathrm{reg}} \subset \Theta$ be a regular stratum on which the action is free and proper, and define the quotient manifold
$
\mathcal M := \Theta_{\mathrm{reg}}/G.
$
We denote by $\pi:\Theta_{\mathrm{reg}} \to \mathcal M$ the quotient map. Since the action is function-preserving, the model descends to a quotient model
$
\bar f_{[\theta]}(x):=f_\theta(x),
[\theta]=\pi(\theta)\in\mathcal M.
$
Given training data $\{(x_i,y_i)\}_{i=1}^n$, define the empirical square loss on the quotient by
$
\bar L([\theta]) := \frac{1}{2n}\sum_{i=1}^n (\bar f_{[\theta]}(x_i)-y_i)^2.
$
A point $[\theta_\star]\in\mathcal M$ is called an \textit{interpolation solution} if
$
\bar f_{[\theta_\star]}(x_i)=y_i
$
for all $i=1,\dots,n$.

\paragraph{Riemannian structure on the quotient.}
Equip $\mathcal M$ with a fixed Riemannian metric $g^Q$. 
For every smooth function $u:\mathcal M \to \mathbb R$, let $du$ denote its differential, let $\nabla^Q u := \operatorname{grad}_{\mathcal M}u$ denote its Riemannian gradient characterized by
$
g^Q_{[\theta]}(\nabla^Q u([\theta]),v)=du_{[\theta]}(v)
$
for all $[\theta]\in\mathcal M$ and $v\in T_{[\theta]}\mathcal M$, and let $\mathrm{Hess}_{\mathcal M}u$ denote its Riemannian Hessian, viewed as the $(0,2)$-tensor
$
\mathrm{Hess}_{\mathcal M}u(v,w)
:=
g^Q\bigl((\nabla^{\mathrm{LC}})_v \nabla^Q u,w\bigr)
$,
where $\nabla^{\mathrm{LC}}$ is the Levi--Civita connection of $g^Q$. Throughout, all traces and norms on $\mathcal M$ are taken with respect to $g^Q$.
We can now define the \textit{quotient flatness}:

\begin{definition}[Quotient flatness]
Let $[\theta_\star]\in\mathcal M$ be an interpolation solution. The quotient flatness is defined by
$
\mathrm{Flat}^Q([\theta_\star])
:=
\mathrm{Tr}_{g^Q}\bigl(\mathrm{Hess}_{\mathcal M}\bar L([\theta_\star])\bigr).
$
\end{definition}

For each training point $x_i$, define
$
\bar f_i([\theta]) := \bar f_{[\theta]}(x_i)
$
and
$
a_i^Q := \nabla^Q \bar f_i([\theta_\star]) \in T_{[\theta_\star]}\mathcal M.
$
Then, the following identity holds for the quotient flatness:

\begin{proposition}[Quotient flatness identity] \label{prop:general-quotient-flatness-identity}
    Let $[\theta_\star]\in\mathcal M$ be an interpolation solution. Then,
    $
    \mathrm{Flat}^Q([\theta_\star])
    =
    \frac{1}{n}\sum_{i=1}^n \|a_i^Q\|_{g^Q}^2.
    $
    The proof is provided in Appendix~\ref{app:proof-prop-general-quotient-flatness-identity}.
\end{proposition}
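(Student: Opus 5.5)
The plan is to compute the Riemannian Hessian of $\bar L$ directly via the product rule for the Levi--Civita connection. Then we use the interpolation condition to kill the residual-weighted term. Write $r_i([\theta]) := \bar f_i([\theta]) - y_i$, so that $\bar L = \frac{1}{2n}\sum_i r_i^2$. Each $\bar f_i$ is smooth on $\mathcal M$. This holds because $f_\theta(x_i)$ is smooth in $\theta$ and $G$-invariant, and $\pi$ is a smooth submersion on the regular stratum; hence $\bar f_i\circ\pi = f_\cdot(x_i)$ descends to a smooth function. I will state this smoothness explicitly as a standing assumption or short remark, since everything else rests on it.

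\textbf{Step 1 (gradient).} By linearity and the chain rule for differentials, $d\bar L = \frac1n\sum_i r_i\, d\bar f_i$. Hence, by the defining property of the Riemannian gradient, $\nabla^Q\bar L = \frac1n\sum_i r_i \nabla^Q\bar f_i$.

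\textbf{Step 2 (Hessian).} For $v,w\in T_{[\theta]}\mathcal M$, apply the Leibniz rule $(\nabla^{\mathrm{LC}})_v(r_i X) = (v r_i) X + r_i (\nabla^{\mathrm{LC}})_v X$ to the vector field $X=\nabla^Q \bar f_i$. Using $v r_i = d\bar f_i(v) = g^Q(\nabla^Q\bar f_i, v)$, this gives
\[
\mathrm{Hess}_{\mathcal M}\bar L(v,w) = \frac1n\sum_{i=1}^n \Bigl( g^Q(\nabla^Q\bar f_i,v)\,g^Q(\nabla^Q\bar f_i,w) + r_i\,\mathrm{Hess}_{\mathcal M}\bar f_i(v,w)\Bigr).
\]
This is the Gauss--Newton decomposition on $\mathcal M$.

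\textbf{Step 3 (evaluate and trace).} At the interpolation solution $[\theta_\star]$ we have $r_i([\theta_\star])=0$ for all $i$, so the second term vanishes pointwise there. Note that we only need $r_i$ to vanish at the point, not its derivatives; the derivative of $r_i$ is exactly what survives in the first term. Take a $g^Q$-orthonormal basis $\{e_k\}$ of $T_{[\theta_\star]}\mathcal M$. Then
\[
\mathrm{Tr}_{g^Q}\bigl(g^Q(a_i^Q,\cdot)\otimes g^Q(a_i^Q,\cdot)\bigr) = \sum_k g^Q(a_i^Q,e_k)^2 = \|a_i^Q\|_{g^Q}^2
\]
by Parseval. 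Summing over $i$ yields $\mathrm{Flat}^Q([\theta_\star]) = \frac1n\sum_i\|a_i^Q\|^2_{g^Q}$.

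The computation is routine. The only point needing care is the justification in the first paragraph: that $\bar f_i$ is well defined and smooth on $\mathcal M$, so that the Leibniz rule and the Hessian make sense. This follows from freeness and properness of the action on $\Theta_{\mathrm{reg}}$, which makes $\pi$ a smooth principal-bundle projection, together with function preservation. No property of $g^Q$ beyond its being a Riemannian metric with its Levi--Civita connection is used. In particular, the identity is independent of the choice of metric in form, though not in value.
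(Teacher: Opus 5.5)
Your proof is correct and follows essentially the same route as the paper: the paper applies the Riemannian Hessian chain rule $\mathrm{Hess}_{\mathcal M}(h\circ r_i)=h''(r_i)\,dr_i\otimes dr_i+h'(r_i)\,\mathrm{Hess}_{\mathcal M}r_i$ with $h(t)=\tfrac12 t^2$ to each sample loss, which is exactly what your Leibniz-rule computation on $\nabla^Q\bar L=\frac1n\sum_i r_i\nabla^Q\bar f_i$ derives. It then drops the residual term at interpolation and traces in a $g^Q$-orthonormal basis just as you do, and your explicit remark that $\bar f_i$ must be smooth on $\mathcal M$ (via the quotient manifold theorem and smoothness of $\theta\mapsto f_\theta(x_i)$) makes explicit a standing assumption the paper leaves implicit.
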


Prop.~\ref{prop:general-quotient-flatness-identity} shows that, at interpolation, quotient flatness is exactly the average squared $g^Q$-norm of the quotient gradients of the samplewise outputs. 
Thus, it is the symmetry-aware counterpart of the standard Hessian-trace notion of flatness.

For the square loss, the interpolation assumption removes the residual-dependent second-derivative term from the Hessian. 
As a result, the quotient Hessian trace becomes exactly the empirical average of the squared quotient-gradient norms. 
Without interpolation, the identity would contain an additional curvature term and would no longer reduce to a purely first-order quantity.





\section{Quotient Flatness Implies Input Smoothness}
Around an interpolation point $[\theta_\star]$, assume there is a smooth local gauge-fixing section
$
s:U\subset\mathcal M \to \Theta_{\mathrm{reg}},
\qquad
s([\theta])=(W_1,\vartheta),
$
such that the model admits the first-layer factorization
$
f_{s([\theta])}(x)=\tilde f(W_1x,\vartheta),
$
where $\vartheta$ collects all gauge-fixed parameters except $W_1$.
The section selects a regular representative, making $W_1$ well defined on the quotient, while the factorization converts $W_1$-derivatives into input derivatives.
Assume also that the quotient gradient controls the first-layer gradient: for some $C_{\mathrm{sec}}>0$,
\begin{align}
    \|\nabla_{W_1} f_{s([\theta_\star])}(x_i)\|_F
    \le
    C_{\mathrm{sec}}
    \|\nabla^Q \bar f_{[\theta_\star]}(x_i)\|_{g^Q},
    \qquad
    i=1,\dots,n.
\end{align}
Let $\Lambda_\star := \|W_{1,\star}\|_{\mathrm{op}}$, where $W_{1,\star}$ is the first-layer block of $s([\theta_\star])$, and define
$
\rho := \min_{1\le i\le n}\|x_i\|_2.
$
Then quotient flatness implies input smoothness:

\begin{theorem}[Quotient flatness implies input smoothness] \label{thm:general-quotient-flatness-implies-input-smoothness}
    Assume $\rho \neq 0$. Let $[\theta_\star]\in\mathcal M$ be an interpolation solution. Then,
    \begin{align}
    \frac{1}{n}\sum_{i=1}^n
    \|\nabla_x \bar f_{[\theta_\star]}(x_i)\|_2^2
    \le
    \frac{\Lambda_\star^2 C_{\mathrm{sec}}^2}{\rho^2}
    \,
    \mathrm{Flat}^Q([\theta_\star]).
    \end{align}
    The proof is provided in Appendix~\ref{app:proof-thm-general-quotient-flatness-implies-input-smoothness}.
\end{theorem}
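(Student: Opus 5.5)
The plan is to follow the flatness-to-smoothness argument of \cite{ma2021linear}, with the raw parameter gradient replaced by the quotient gradient through the gauge-fixing section.

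\textbf{Step 1: chain rule on both sides.} Fix $i$ and write $z=W_{1,\star}x_i$ and $h(z):=\nabla_z\tilde f(z,\vartheta_\star)\in\mathbb R^m$. Since $f_{s([\theta])}(x)=\tilde f(W_1x,\vartheta)$ and $\bar f_{[\theta]}(x)=f_{s([\theta])}(x)$ on $U$, the chain rule gives
\begin{align}
\nabla_x \bar f_{[\theta_\star]}(x_i)=W_{1,\star}^\top h(z),
\qquad
\nabla_{W_1} f_{s([\theta_\star])}(x_i)=h(z)\,x_i^\top .
\end{align}

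\textbf{Step 2: compare the two gradients.} The second matrix has rank one, so its Frobenius norm factorizes:
\begin{align}
\|\nabla_{W_1} f_{s([\theta_\star])}(x_i)\|_F=\|h(z)\|_2\|x_i\|_2.
\end{align}
Hence
\begin{align}
\|\nabla_x \bar f_{[\theta_\star]}(x_i)\|_2
\le \Lambda_\star\|h(z)\|_2
=\frac{\Lambda_\star}{\|x_i\|_2}\|\nabla_{W_1} f_{s([\theta_\star])}(x_i)\|_F
\le \frac{\Lambda_\star}{\rho}\|\nabla_{W_1} f_{s([\theta_\star])}(x_i)\|_F .
\end{align}
The assumption $\rho\neq0$ is used here to divide by $\|x_i\|_2\ge\rho>0$.

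\textbf{Step 3: pass to the quotient gradient.} The section-control hypothesis gives
\begin{align}
\|\nabla_x \bar f_{[\theta_\star]}(x_i)\|_2\le \frac{\Lambda_\star C_{\mathrm{sec}}}{\rho}\|a_i^Q\|_{g^Q},
\end{align}
using $a_i^Q=\nabla^Q\bar f_i([\theta_\star])$.

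\textbf{Step 4: average and apply the flatness identity.} Squaring the bound of Step 3, averaging over $i$, and invoking Prop.~\ref{prop:general-quotient-flatness-identity} (valid since $[\theta_\star]$ is an interpolation solution) replaces $\frac1n\sum_i\|a_i^Q\|_{g^Q}^2$ by $\mathrm{Flat}^Q([\theta_\star])$. This yields the claim.

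\textbf{Main obstacle: consistency of definitions.} The computation itself is routine. The point requiring care is that $\bar f$ and $f\circ s$ agree as functions of $x$ near $x_i$, so that the $x$-gradient of the quotient model is the $x$-gradient of $\tilde f(W_{1,\star}x,\vartheta_\star)$. This holds because the action is function-preserving and $s$ is a section, so $\pi\circ s=\mathrm{id}$. Similarly, $\nabla_{W_1}$ is understood as the partial Euclidean gradient of $\tilde f(W_1x_i,\vartheta)$ in the $W_1$ coordinates with $\vartheta$ held fixed. This is exactly the quantity that the hypothesis with constant $C_{\mathrm{sec}}$ controls, so no relation between $g^Q$ and the Euclidean metric beyond that hypothesis is needed.
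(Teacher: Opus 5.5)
Your proposal is correct and follows the paper's proof essentially line by line. Both arguments use the chain rule to get $\nabla_x \bar f_{[\theta_\star]}(x_i)=W_{1,\star}^\top h(z)$, the rank-one Frobenius factorization $\|h(z)x_i^\top\|_F=\|h(z)\|_2\|x_i\|_2$, and division by $\|x_i\|_2\ge\rho>0$; you also spell out the two final steps (the $C_{\mathrm{sec}}$ hypothesis, then Prop.~\ref{prop:general-quotient-flatness-identity} after squaring and averaging) that the appendix leaves implicit in ``applying this at $x=x_i$ proves the claimed inequality.''
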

$\Lambda_\star=\|W_{1,\star}\|_{\mathrm{op}}$ is the chain-rule factor that appears in passing from $W_1x$ to $x$. $\rho>0$ is needed to divide by $\|x_i\|_2$ uniformly; if some training input vanished, the bound would degenerate, which can be avoided w.l.o.g. by redefining the range of $x$.

Thm.~\ref{thm:general-quotient-flatness-implies-input-smoothness} is the symmetry-aware analogue of the flatness-to-smoothness step in \cite{ma2021linear}. It shows that\textit{ once function-preserving symmetries are removed}, flatness \textit{in the quotient geometry} controls the average input sensitivity of the learned function.

\subsection{Quotient Linearized SGD and Quotient Flatness Bound}

In this section, we derive an upper bound on the quotient flatness in terms of the batch size and learning rate.
In the next section, we derive a bound for the input smoothness.
We first explain how the symmetry-reduced local dynamics of SGD induces a \textit{linear recursion} on the tangent space of the quotient manifold.
Let $[\theta_\star] \in \mathcal M$ be an interpolation solution, and let
$\mathcal H := T_{[\theta_\star]}\mathcal M$
be the tangent space at $[\theta_\star]$, equipped with the inner product induced by $g^Q_{[\theta_\star]}$.
For each training point $x_i$, recall that
$a_i^Q := \nabla^Q \bar f_i([\theta_\star]) \in \mathcal H$.
Define the rank-one self-adjoint operator
$H_i^Q : \mathcal H \to \mathcal H$
by
$
H_i^Q \xi := g^Q_{[\theta_\star]}(a_i^Q,\xi)\,a_i^Q,
$
equivalently, in tensor notation,
$
H_i^Q = a_i^Q \otimes a_i^Q.
$

Consider one SGD step on the quotient manifold using a mini-batch
$I=(i_1,\dots,i_B)$
of size $B$ sampled with replacement.
Near $[\theta_\star]$, the quotient SGD map is locally given by
$
[q] \mapsto [q] - \frac{\eta}{B}\sum_{b=1}^B \nabla^Q \bar \ell_{i_b}([q]),
$
where $\bar \ell_i$ denotes the samplewise loss on the quotient manifold.
Therefore, the Jacobian of the quotient SGD map at $[\theta_\star]$ is
\begin{align}
M_I
:=
\mathrm{Id} - \frac{\eta}{B}\sum_{b=1}^B H_{i_b}^Q.
\end{align}

If $\xi_t \in \mathcal H$ denotes the infinitesimal quotient displacement of the iterate from $[\theta_\star]$ in local normal coordinates, then linearizing the quotient SGD map at $[\theta_\star]$ yields the linear recursion on the tangent space of the quotient manifold:
\begin{align}
    \xi_{t+1}
    =
    M_{I_t}\xi_t,
    \qquad
    \xi_t\in\mathcal H.
\end{align}
That is, the quotient linearized SGD is the tangent-space recursion obtained by retaining only the first-order term of the quotient SGD dynamics around $[\theta_\star]$.

\begin{definition}[One-step mean-square quotient linear stability] \label{def:general-ms-quotient-stability}
    We say that $[\theta_\star]$ is one-step mean-square quotient linearly stable if
    \begin{align}
    \mathbb E \|M_{I_t}\xi\|_{g^Q}^2 \le \|\xi\|_{g^Q}^2
    \qquad
    \text{for all } \xi \in \mathcal H,
    \end{align}
    where the expectation is over the random mini-batch $I_t$.
\end{definition}

Then, the linear recursion and stability leads to the following quotient flatness bound:

\begin{theorem}[Mean-square quotient stability implies quotient flatness bound] \label{thm:general-ms-stability-implies-flatness}
    Assume Definition~\ref{def:general-ms-quotient-stability}. Then
    $
    \mathrm{Flat}^Q([\theta_\star]) \le \frac{2B}{\eta}.
    $
    The proof is provided in Appendix~\ref{app:proof-thm-general-ms-stability-implies-flatness}.
\end{theorem}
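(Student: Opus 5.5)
The plan is to expand the one-step second moment and trace it over an orthonormal basis, in the spirit of \cite{ma2021linear}. Write $\bar H := \frac1n\sum_{i=1}^n H_i^Q$. By Prop.~\ref{prop:general-quotient-flatness-identity}, $\mathrm{Tr}\,\bar H = \frac1n\sum_i \|a_i^Q\|_{g^Q}^2 = \mathrm{Flat}^Q([\theta_\star])$. Each $H_i^Q$ is self-adjoint and positive semidefinite. Hence $M_I$ is self-adjoint and
\begin{align}
\mathbb E\|M_I\xi\|^2 = \|\xi\|^2 - \frac{2\eta}{B}\,\mathbb E\Bigl\langle \xi,\sum_b H_{i_b}^Q\xi\Bigr\rangle + \frac{\eta^2}{B^2}\,\mathbb E\Bigl\|\sum_b H_{i_b}^Q\xi\Bigr\|^2 .
\end{align}
With sampling with replacement, the indices $i_b$ are i.i.d.\ uniform. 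The linear term is therefore $-2\eta\langle\xi,\bar H\xi\rangle$. For the quadratic term, split $\mathbb E\|\sum_b H_{i_b}\xi\|^2$ into diagonal terms $b=b'$, giving $B\cdot\frac1n\sum_i\|H_i\xi\|^2$, and off-diagonal terms, giving $B(B-1)\|\bar H\xi\|^2$.

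Stability then gives, for every $\xi$,
\begin{align}
\frac{\eta^2}{B}\,\frac1n\sum_i \|H_i^Q\xi\|^2 + \frac{\eta^2(B-1)}{B}\|\bar H\xi\|^2 \le 2\eta\langle \xi,\bar H\xi\rangle .
\end{align}
I drop the nonnegative off-diagonal term. Then I sum this inequality over an orthonormal basis $\{e_k\}$ of $\mathcal H$ with respect to $g^Q_{[\theta_\star]}$. The right side becomes $2\eta\,\mathrm{Tr}\,\bar H$.

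On the left, the rank-one structure gives $\sum_k\|H_i^Q e_k\|^2 = \mathrm{Tr}((H_i^Q)^2) = \|a_i^Q\|^4$. The result is
\begin{align}
\frac{\eta}{B}\,\frac1n\sum_i\|a_i^Q\|^4 \le 2\,\mathrm{Flat}^Q .
\end{align}
To close, apply Jensen (Cauchy--Schwarz): $\frac1n\sum_i\|a_i^Q\|^4 \ge \bigl(\frac1n\sum_i\|a_i^Q\|^2\bigr)^2 = (\mathrm{Flat}^Q)^2$. This yields $\frac{\eta}{B}(\mathrm{Flat}^Q)^2 \le 2\,\mathrm{Flat}^Q$. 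If $\mathrm{Flat}^Q = 0$ the claim is trivial; otherwise dividing gives $\mathrm{Flat}^Q \le 2B/\eta$.

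The main point requiring care is the bookkeeping of the second moment. The expectation of the product over the batch must be computed correctly, including the i.i.d.\ cross terms and the fact that the diagonal terms carry weight $1/B$ rather than $1$; this is exactly where the factor $B$ in the bound comes from. A secondary check is that traces and adjoints are taken with respect to $g^Q_{[\theta_\star]}$, so that $H_i^Q$ is genuinely self-adjoint and $\mathrm{Tr}((H_i^Q)^2) = \|a_i^Q\|_{g^Q}^4$. Summing over an orthonormal basis is a legitimate use of the pointwise-in-$\xi$ stability hypothesis, because the hypothesis holds for every $\xi$ and $\mathcal H$ is finite-dimensional.
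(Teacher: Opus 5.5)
Your proof is correct and follows essentially the same route as the paper. You expand $\mathbb E\|M_I\xi\|^2\le\|\xi\|^2$, trace over a $g^Q$-orthonormal basis to get $\eta^2\,\mathbb E\,\mathrm{Tr}(A_I^2)\le 2\eta\,\mathrm{Flat}^Q$, discard the cross terms, and close with Cauchy--Schwarz/Jensen on $\frac1n\sum_i\|a_i^Q\|^4$. The only difference is the order of operations: you average first and drop the off-diagonal contribution $\frac{\eta^2(B-1)}{B}\|\bar H\xi\|^2$ before tracing, whereas the paper traces first and drops $\sum_{b\neq c}\langle a_{i_b}^Q,a_{i_c}^Q\rangle^2\ge 0$ pathwise, and the two are equivalent.
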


The mean-square quotient linear stability assumption
$
\mathbb E\|M_I \xi\|^2 \le \|\xi\|^2
$
for all $\xi \in \mathcal H$, with
$
M_I = \mathrm{Id} - \eta A_I
$
and
$
A_I = \frac{1}{B}\sum_{b=1}^B H_{i_b}^Q,
$
is a local small-step stability condition for the quotient linearized stochastic gradient descent (SGD) dynamics. It is the quotient analogue of the classical gradient-descent stability requirement $ \eta \lambda_{\max} \le 2 $ for a quadratic model. In particular, a sufficient condition is
$
\lambda_{\max}(A_I) \le \frac{2}{\eta}
$
for every mini-batch $I$, since then $M_I^\ast M_I \preceq \mathrm{Id}$ batchwise. In the full-batch limit, this reduces to the familiar condition
$
\eta \lambda_{\max}(\bar H_Q) \le 2,
$
where $ \bar H_Q = \frac{1}{n}\sum_{i=1}^n H_i^Q $. Thus, the assumption is realistic as a local condition near a converged interpolation solution under sufficiently small learning rate, especially after quotienting out function-preserving symmetry directions, although it is not intended as a global condition along the entire training trajectory.


Thm.~\ref{thm:general-ms-stability-implies-flatness} is the symmetry-aware flatness-selection statement. It shows that if the quotient linearized SGD does not increase the expected squared tangent-space norm in one step, then the quotient flatness must be small, with the explicit scaling $B/\eta$. Its empirical validation is provided in Fig.~\ref{fig:flatness_bound}

\begin{figure}[tb]
    \begin{center}
    \includegraphics[width=0.6\textwidth]{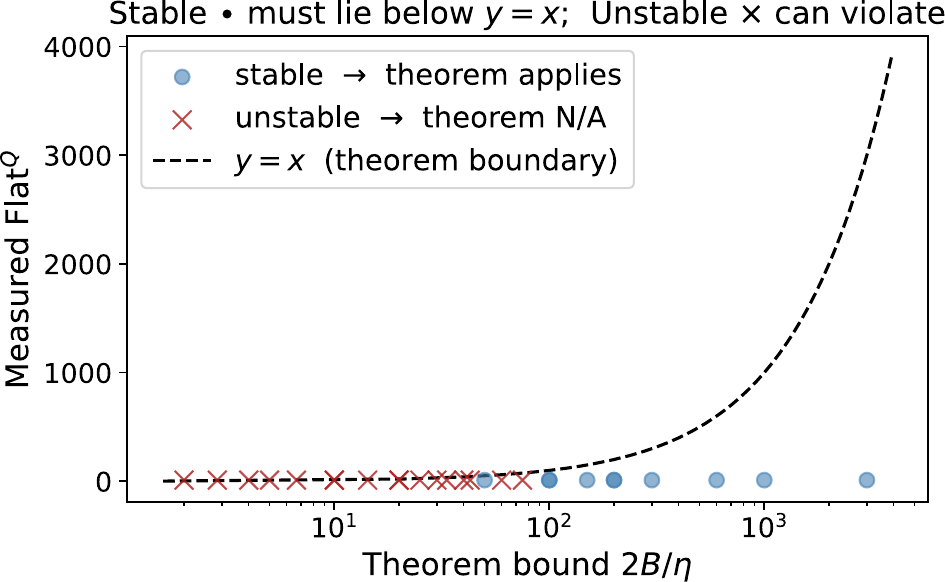}
    \end{center}
    \caption{\textbf{Flatness bound in Thm.~\ref{thm:general-ms-stability-implies-flatness}.}
          A two-layer ReLU student network
          $f(x)=\sum_{r=1}^{m}a_r\operatorname{ReLU}(w_r^\top x)$
          with $m=10$ neurons and input dimension $d=4$
          is trained on a teacher-generated regression dataset of $n=25$ samples
          (teacher width $m_{\rm t}=4$).
          After training to near-interpolation with SGD
          ($\ell_{\rm train}<10^{-7}$),
          the quotient flatness
          $\mathrm{Flat}^{Q}(\theta^*)$
          and the one-step mean-square quotient stability operator
          $\mathbb{E}_I[M_I^2]=I-2\eta\bar{A}+\eta^2\,\mathbb{E}[A_I^2]$
          are evaluated analytically for each combination of
          learning rate $\eta\in\{0.01,0.05,0.10,0.20,0.30,0.40,0.50,0.70,1.00\}$
          and batch size $B\in\{1,5,15\}$.
          Stability is assessed as $\lambda_{\max}(\mathbb{E}[M_I^2])\le 1$,
          computed on the active subspace of $A^Q$
          (rows of $A^Q$ projected onto its row space via SVD,
          threshold $10^{-6}\,\sigma_1$).
          \textbf{Blue circles}: stable $({\eta,B})$ pairs,
          for which the theorem guarantees
          $\mathrm{Flat}^Q\le 2B/\eta$ (below the diagonal $y=x$).
          \textbf{Red crosses}: unstable pairs, for which the bound may be violated.
          All $11$ stable pairs satisfy the bound, confirming the theorem.
    }
    \label{fig:flatness_bound}
\end{figure}

\subsection{Higher-order Quotient Moment Control and Bound for Input Smoothness}
Next, we derive a bound for the input smoothness.
Let $\mathrm{Sym}^k(\mathcal H)$ be the symmetric $k$-tensor space over $\mathcal H$. For the quotient linearized dynamics
$
\xi_{t+1}=M_{I_t}\xi_t,
$
define
$
M_t^{(k)} := \mathbb E[\xi_t^{\otimes k}].
$

\begin{proposition}[Closed quotient tensor recursion]
\label{prop:quotient-tensor-recursion}
For each $k\ge 1$, there exists a linear operator
$
\mathcal T_k := \mathbb E[M_{I_t}^{\otimes k}]
$
on $\mathrm{Sym}^k(\mathcal H)$ such that
\begin{align}
M_{t+1}^{(k)} = \mathcal T_k M_t^{(k)}.
\end{align}
The proof is provided in Appendix~\ref{app:proof-prop-quotient-tensor-recursion}.
\end{proposition}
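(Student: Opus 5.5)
The plan is to compute $\xi_{t+1}^{\otimes k}$ directly from the recursion and then take expectations. The key fact is that the mini-batch $I_t$ is drawn independently of the past. Write $\mathcal F_t$ for the $\sigma$-algebra generated by $\xi_0, I_0,\dots,I_{t-1}$. Then $\xi_t$ is $\mathcal F_t$-measurable, and $I_t$ is independent of $\mathcal F_t$. I assume, as is implicit in the paper, that the batches are i.i.d. across steps with the same sampling law. I also assume $\xi_0$ is deterministic or independent of all batches, and that $\mathbb E\|\xi_0\|^k<\infty$ so that every moment $M_t^{(k)}$ is finite.

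The first step is purely algebraic. For any linear map $M$ on $\mathcal H$, the tensor power $M^{\otimes k}$ acts on $\mathcal H^{\otimes k}$ by $M^{\otimes k}(v_1\otimes\cdots\otimes v_k)=Mv_1\otimes\cdots\otimes Mv_k$. Therefore
\[
(M\xi)^{\otimes k}=M^{\otimes k}\xi^{\otimes k}.
\]
Moreover, $M^{\otimes k}$ commutes with the permutation action of $S_k$ on tensor slots. It therefore preserves $\mathrm{Sym}^k(\mathcal H)$, and so does any average of such maps. Applying this to the recursion gives
\[
\xi_{t+1}^{\otimes k}=M_{I_t}^{\otimes k}\,\xi_t^{\otimes k}.
\]

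The second step takes conditional expectation given $\mathcal F_t$. Because $I_t$ is independent of $\mathcal F_t$ and the expression is bilinear in $\bigl(M_{I_t}^{\otimes k},\,\xi_t^{\otimes k}\bigr)$, we get
\[
\mathbb E\bigl[\xi_{t+1}^{\otimes k}\mid\mathcal F_t\bigr]=\mathbb E\bigl[M_{I_t}^{\otimes k}\bigr]\,\xi_t^{\otimes k}=\mathcal T_k\,\xi_t^{\otimes k}.
\]
Here $\mathcal T_k$ is a finite average: $I_t$ takes at most $n^B$ values, and each $M_I^{\otimes k}$ is a bounded operator on the finite-dimensional space $\mathcal H^{\otimes k}$. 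So $\mathcal T_k$ is a well-defined linear operator, and it does not depend on $t$. Taking the full expectation and using linearity of $\mathcal T_k$ to pull it outside the expectation yields $M_{t+1}^{(k)}=\mathcal T_k M_t^{(k)}$. The first step shows this identity holds on $\mathrm{Sym}^k(\mathcal H)$.

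The only delicate point is justifying the independence and factorization of the expectation. This needs two things: the batch at step $t$ is drawn independently of the past iterates, and the moments are integrable. Since $\mathcal H$ is finite-dimensional and each $M_I$ is bounded, integrability follows by induction from $\|\xi_{t+1}\|\le\max_I\|M_I\|_{\mathrm{op}}\|\xi_t\|$. Everything else is routine multilinear algebra.
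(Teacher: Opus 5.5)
Your proposal is correct and follows the same route as the paper. Like the paper, you write $(M_{I_t}\xi_t)^{\otimes k}=M_{I_t}^{\otimes k}\xi_t^{\otimes k}$ and factor the expectation using the independence of the batch $I_t$ from $\xi_t$; your extra checks (conditioning on $\mathcal F_t$, invariance of $\mathrm{Sym}^k(\mathcal H)$ under $M_I^{\otimes k}$, integrability, and identically distributed batches so that $\mathcal T_k$ does not depend on $t$) make explicit what the paper's one-line proof leaves implicit.
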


The closed tensor recursion in Prop.~\ref{prop:quotient-tensor-recursion} separates two roles of the quotient linearized dynamics. First, as used in Thm.~\ref{thm:general-ms-stability-implies-flatness}, one-step mean-square non-expansiveness bounds the trace of the quotient Hessian and hence quotient flatness. Second, tensorized stability controls higher empirical moments of the samplewise quotient gradients ${a_i^Q}{i=1}^n$. These moments are unnecessary for the basic flatness bound, but are needed to control stronger input-side smoothness quantities, such as empirical $2k$-th moments of $\|\nabla_x \bar f_{[\theta_\star]}(x_i)\|_2$.

The next step extends this second-order principle to higher orders. As in Thm.~\ref{thm:general-ms-stability-implies-flatness}, if the quotient linearized SGD dynamics do not amplify tangent perturbations in the relevant moment, then the rank-one operators $a_i^Q \otimes a_i^Q$ must be controlled on average. Since the target quantity is now a degree-$2k$ polynomial in tangent perturbations rather than a quadratic form on $H$, the tensor recursion in Prop.~\ref{prop:quotient-tensor-recursion} provides the required closure, and the following stability assumption yields explicit empirical moment bounds.

\begin{definition}[One-step $2k$-mean quotient linear stability]
Let $k \ge 1$. We say that $[\theta_\star]$ is one-step $2k$-mean quotient
linearly stable if
\begin{align}
\mathbb E_I \|M_I \xi\|_{g^Q}^{2k}
\le
\|\xi\|_{g^Q}^{2k}
\qquad
\text{for every } \xi \in H ,
\end{align}
where $H=T_{[\theta_\star]}M$, $M_I=\mathrm{Id}-\eta A_I$, and
\begin{align}
A_I
=
\frac{1}{B}
\sum_{b=1}^B
a^Q_{i_b} \otimes a^Q_{i_b}.
\end{align}
\end{definition}

\begin{theorem}[Higher-order quotient moment bound] \label{thm:sharp-quotient-moment-bound}
    Assume that $[\theta_\star]$ is one-step $2k$-mean quotient linearly stable.
    Let $\{e_j\}_{j=1}^{d_Q}$ be any $g^Q$-orthonormal basis of
    $H=T_{[\theta_\star]}M$. Then, for every $j=1,\ldots,d_Q$,
    \begin{align}
    \frac{1}{n}
    \sum_{i=1}^n
    |\langle a_i^Q,e_j\rangle_{g^Q}|^{2k}
    \le
    \frac{2^k B^{k-1}}{\eta^k}.
    \end{align}
    Consequently,
    \begin{align}
    \frac{1}{n}
    \sum_{i=1}^n
    \|a_i^Q\|_{\ell_{2k}(e)}^{2k}
    \le
    d_Q
    \frac{2^k B^{k-1}}{\eta^k},
    \end{align}
    where
$
    \|v\|_{\ell_{2k}(e)}^{2k}
    :=
    \sum_{j=1}^{d_Q}
    |\langle v,e_j\rangle_{g^Q}|^{2k}.
$
    Moreover, for the Riemannian norm,
    \begin{align}
    \frac{1}{n}
    \sum_{i=1}^n
    \|a_i^Q\|_{g^Q}^{2k}
    \le
    d_Q^k
    \frac{2^k B^{k-1}}{\eta^k}.
    \end{align}
    The proof is provided in Appendix~\ref{app:proof-thm-sharp-quotient-moment-bound}.
\end{theorem}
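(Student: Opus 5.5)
The plan is to test the stability hypothesis on basis vectors. This reduces the tensor statement to a scalar moment inequality for a nonnegative random variable built from the mini-batch. Fix $j$ and put $\xi=e_j$, $c_i:=\langle a_i^Q,e_j\rangle_{g^Q}$ and
\[
s_I:=\langle A_I e_j,e_j\rangle_{g^Q}=\frac1B\sum_{b=1}^B c_{i_b}^2\ge 0 .
\]
By Cauchy--Schwarz, $\|M_I e_j\|_{g^Q}\ge |\langle M_I e_j,e_j\rangle_{g^Q}|=|1-\eta s_I|$. One-step $2k$-mean quotient linear stability therefore gives
\[
\mathbb E_I|1-\eta s_I|^{2k}\le 1 .
\]
Proposition~\ref{prop:quotient-tensor-recursion} is not needed here; only the one-step hypothesis evaluated at a single vector is used.

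Next I pass from $s_I$ to the empirical moment. All terms are nonnegative, so expanding $(\sum_b c_{i_b}^2)^k$ and keeping only the diagonal terms gives $s_I^k\ge B^{-k}\sum_b c_{i_b}^{2k}$. Since the indices are sampled uniformly with replacement, taking expectations yields
\[
\mathbb E_I s_I^k\ge B^{1-k}\,\frac1n\sum_{i=1}^n c_i^{2k}.
\]
The first claim thus reduces to showing $\mathbb E[(\eta s_I)^k]\le 2^k$ for the nonnegative variable $x:=\eta s_I$, given $\mathbb E|1-x|^{2k}\le 1$.

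This scalar step is the main obstacle. The natural route is to split at $x=2$. On $\{x\ge 2\}$ we have $x-1\ge x/2\ge 1$, hence $(x/2)^k\le (x/2)^{2k}\le (x-1)^{2k}$. On $\{x<2\}$ we have $(x/2)^k\le 1$. Summing the two regions only gives $\mathbb E(x/2)^k\le \Pr(x<2)+\mathbb E|1-x|^{2k}$, which can be as large as $2$.

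To close the gap I would look for a pointwise inequality of the form $(x/2)^k\le \tfrac12\bigl(1+|1-x|^{2k}\bigr)$ or similar on $x\ge 0$. Taking expectations of such an inequality would give the constant $2^k$, possibly with a modest extra factor. The fallback is to keep the full expression $\|M_Ie_j\|^2=1-2\eta s_I+\eta^2\|A_Ie_j\|^2$ together with $\|A_Ie_j\|\ge s_I$, and exploit convexity of $t\mapsto t^k$. I expect the clean constant $2^k$ to require such a sharper pointwise inequality; if it fails, the argument still gives the bound up to a factor depending only on $k$.

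The two consequences are then routine. Summing the per-coordinate bound over $j=1,\dots,d_Q$ gives the $\ell_{2k}(e)$ statement with the factor $d_Q$. For the Riemannian norm, apply H\"older (or the power-mean inequality) to $\|v\|_{g^Q}^2=\sum_j|\langle v,e_j\rangle|^2$. This gives $\|v\|_{g^Q}^{2k}\le d_Q^{\,k-1}\|v\|_{\ell_{2k}(e)}^{2k}$, and multiplying by the previous bound produces the factor $d_Q^{\,k}$.
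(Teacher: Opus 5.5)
Your reduction is the same as the paper's. You test stability at $\xi=e_j$ and use Cauchy--Schwarz to get $\mathbb E_I|1-\eta s_I|^{2k}\le 1$. You lower-bound $\mathbb E_I s_I^k$ by keeping the diagonal terms. You finish by summing over $j$ and applying the power-mean inequality. All of that is correct.

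The gap is the scalar step you flag, which you leave open. Your splitting at $x=2$ only proves $\mathbb E_I(\eta s_I)^k\le 2^{k+1}$, so it gives the stated bounds with an extra factor $2$, not the theorem as written. The pointwise inequality $(x/2)^k\le \tfrac12(1+|1-x|^{2k})$ that you hope for is false. Both sides equal $1$ at $x=2$, where the left side has derivative $k/2$ and the right side has derivative $k$, so it fails just below $x=2$. For $k=1$ it is equivalent to $(x-1)(x-2)\ge 0$, which fails on $(1,2)$.

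Your fallback via $\|M_Ie_j\|^2=1-2\eta s_I+\eta^2\|A_Ie_j\|^2$ together with $\|A_Ie_j\|\ge s_I$ adds nothing. It only recovers $\|M_Ie_j\|\ge|1-\eta s_I|$, which you already have.

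The missing idea is to use norms rather than a pointwise inequality. By Minkowski's inequality in $L^{2k}$ over the mini-batch randomness,
$\|\eta s_I\|_{L^{2k}}\le\|1\|_{L^{2k}}+\|1-\eta s_I\|_{L^{2k}}\le 2$, so $\mathbb E_I(\eta s_I)^{2k}\le 2^{2k}$. Lyapunov's (or Jensen's) inequality, using $s_I\ge 0$, then gives $\mathbb E_I(\eta s_I)^k\le\bigl(\mathbb E_I(\eta s_I)^{2k}\bigr)^{1/2}\le 2^k$. This is exactly how the paper closes the step, and it produces the constant $2^k$ with no case analysis. Combined with your diagonal lower bound, it yields the coordinatewise estimate, and the rest of your argument goes through unchanged.
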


Thm.~\ref{thm:sharp-quotient-moment-bound} is still an intrinsic statement on the quotient tangent space: it
bounds the empirical higher moments of the quotient gradients $a_i^Q$. By itself,
it does not yet give a statement about the regularity of the learned predictor as
a function of the input variable $x$. To obtain such a statement, we must combine
Thm.~\ref{thm:sharp-quotient-moment-bound} with the geometric comparison used in Thm.~\ref{thm:general-quotient-flatness-implies-input-smoothness}. That comparison
transfers quotient-gradient control to first-layer parameter-gradient control
through the local gauge section, and then transfers first-layer control to input
derivatives through the multiplicative representation
$f_{s([\theta])}(x)=\widetilde f(W_1x,\vartheta)$. The following corollary is
therefore a direct synthesis of the dynamical moment estimate and the
quotient-to-input comparison.

\begin{corollary}[Higher-order input smoothness]
    \label{cor:general-higher-order-input-smoothness}
    Assume the hypotheses of Theorem 4.1 and the one-step $2k$-mean quotient
    linear stability condition. Then
    \begin{align}
    \frac{1}{n}
    \sum_{i=1}^n
    \|\nabla_x \bar f_{[\theta_\star]}(x_i)\|_2^{2k}
    \le
    \left(
    \frac{\Lambda_\star C_{\mathrm{sec}}}{\rho}
    \right)^{2k}
    d_Q^k
    \frac{2^k B^{k-1}}{\eta^k}.
    \end{align}
    The proof is provided in Appendix~\ref{app:proof-cor-general-higher-order-input-smoothness}.
\end{corollary}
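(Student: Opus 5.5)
The plan is to establish a pointwise comparison between input gradients and quotient gradients at each training point, and then raise it to the power $2k$, average, and apply the last estimate of Thm.~\ref{thm:sharp-quotient-moment-bound}. The pointwise estimate is the one underlying Thm.~\ref{thm:general-quotient-flatness-implies-input-smoothness}. We will re-derive it rather than invoke that theorem as stated, because the theorem only gives an averaged second-moment bound, and averaged squares cannot be raised to higher powers.

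First, the pointwise inequality. Since the action is function-preserving, $\bar f_{[\theta_\star]}(x)=f_{s([\theta_\star])}(x)=\tilde f(W_{1,\star}x,\vartheta_\star)$. Write $g_i:=\nabla_z\tilde f(z,\vartheta_\star)|_{z=W_{1,\star}x_i}\in\mathbb R^{m}$. The chain rule gives two identities:
\begin{align}
\nabla_x \bar f_{[\theta_\star]}(x_i)=W_{1,\star}^\top g_i,
\qquad
\nabla_{W_1} f_{s([\theta_\star])}(x_i)=g_i x_i^\top .
\end{align}
The second is a rank-one matrix, so $\|\nabla_{W_1} f_{s([\theta_\star])}(x_i)\|_F=\|g_i\|_2\|x_i\|_2\ge\rho\|g_i\|_2$. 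Combining this with the first identity and with the section-comparison hypothesis gives
\begin{align}
\|\nabla_x \bar f_{[\theta_\star]}(x_i)\|_2\le \Lambda_\star\|g_i\|_2\le \frac{\Lambda_\star}{\rho}\|\nabla_{W_1} f_{s([\theta_\star])}(x_i)\|_F\le \frac{\Lambda_\star C_{\mathrm{sec}}}{\rho}\|a_i^Q\|_{g^Q}.
\end{align}
Here we use $a_i^Q=\nabla^Q\bar f_{[\theta_\star]}(x_i)$ by definition. The assumption $\rho\neq0$ is exactly what permits the division by $\|x_i\|_2$.

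Next, raise this inequality to the power $2k$. Both sides are nonnegative, so monotonicity of $t\mapsto t^{2k}$ on $[0,\infty)$ preserves the direction. Averaging over $i$ then yields
\begin{align}
\frac1n\sum_{i=1}^n\|\nabla_x \bar f_{[\theta_\star]}(x_i)\|_2^{2k}\le\left(\frac{\Lambda_\star C_{\mathrm{sec}}}{\rho}\right)^{2k}\frac1n\sum_{i=1}^n\|a_i^Q\|_{g^Q}^{2k}.
\end{align}

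Finally, under the one-step $2k$-mean quotient linear stability hypothesis, the Riemannian-norm part of Thm.~\ref{thm:sharp-quotient-moment-bound} bounds the right-hand average by $d_Q^k\,2^kB^{k-1}/\eta^k$, which gives the claim. For completeness, that last estimate itself comes from two steps. The first is the per-coordinate bound. The second is the power-mean inequality $\|v\|_{g^Q}^{2k}=\bigl(\sum_j\langle v,e_j\rangle^2\bigr)^k\le d_Q^{k-1}\sum_j|\langle v,e_j\rangle|^{2k}$, applied in an orthonormal basis.

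There is no substantial obstacle; the only delicate point is the one already noted. We must use the pointwise chain-rule comparison, not the averaged statement of Thm.~\ref{thm:general-quotient-flatness-implies-input-smoothness}. We must also check that the rank-one structure of $\nabla_{W_1}f$ gives the lower bound $\|g_i\|_2\|x_i\|_2$ exactly, so that the constant is precisely $\Lambda_\star C_{\mathrm{sec}}/\rho$ with no loss.
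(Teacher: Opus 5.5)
Your proposal is correct and is the same argument as the paper's proof. It takes the pointwise estimate $\|\nabla_x \bar f_{[\theta_\star]}(x_i)\|_2 \le (\Lambda_\star C_{\mathrm{sec}}/\rho)\|a_i^Q\|_{g^Q}$ from the proof of Thm.~\ref{thm:general-quotient-flatness-implies-input-smoothness}, raises it to the power $2k$, averages over $i$, and applies the Riemannian-norm bound of Thm.~\ref{thm:sharp-quotient-moment-bound}. Your explicit re-derivation of the pointwise step, rather than citing the averaged theorem, fills in exactly what the paper compresses into ``the pointwise estimate used in Theorem 4.1.''
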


The results in this subsection complete the optimization-to-smoothness step.
Thm.~\ref{thm:general-ms-stability-implies-flatness} shows that quotient linear stability selects interpolating solutions with small quotient flatness.
Thm.~\ref{thm:sharp-quotient-moment-bound} and Cor.~\ref{cor:general-higher-order-input-smoothness} extend this implication: stronger moment stability of the quotient linearized dynamics gives stronger empirical control of input derivatives.
These estimates are local both in parameter space, since they use the linearized quotient SGD dynamics near an interpolation solution, and in input space, since they control derivatives only at the training inputs.

The next section converts these local estimates into a population generalization bound.
Under a balanced local covering condition, most test mass lies near the training samples.
If the target is Lipschitz and the learned quotient model is locally affine on these neighborhoods, the above input-smoothness bounds control the covered-region prediction error, while boundedness controls the uncovered mass.
Thus quotient flatness, and hence quotient stability, yields generalization.

\section{Quotient Flatness Implies Generalization under Local Covering}
We now derive the relationship that \textit{the quotient flatness implies generalization}.
Let $\varepsilon>0$, $\delta\in[0,1)$, and $\nu\ge 1$. We assume that there exist measurable sets $V_1,\dots,V_n$ such that
\begin{align}
&V_i \subset B(x_i,\varepsilon)
\ \
\text{for all } i,
\qquad
V_i \cap V_j = \varnothing
\ \
\text{for } i\neq j, \nonumber \\
&\mathbb P_X\Bigl(\bigcup_{i=1}^n V_i\Bigr)\ge 1-\delta,
\qquad
\mathbb P_X(V_i)\le \frac{\nu}{n}
\ \
\text{for all } i,
\end{align}
where for $x \in \mathcal X \subset \mathbb R^d$ and $\varepsilon>0$,
$
B(x,\varepsilon)
:=
\{y\in \mathcal X : \|y-x\|_2 < \varepsilon\}
$
denotes the Euclidean ball in the input space.
We refer to the covering hypothesis above as the \textit{$(\varepsilon,\delta,\nu)$-balanced local covering condition}.
Assume also that the target function $f^\star$ is $L_\star$-Lipschitz and that both $\bar f_{[\theta_\star]}$ and $f^\star$ are uniformly bounded by $M$ on $\mathcal X$.
Finally, assume that on each neighborhood $B(x_i,\varepsilon)$, the quotient model is affine with gradient $\nabla_x \bar f_{[\theta_\star]}(x_i)$. This holds, for example, in the two-layer ReLU case if the activation pattern is stable in those neighborhoods.

\begin{theorem}[Quotient flatness implies generalization]
    \label{thm:general-quotient-flatness-implies-generalization}
    Under the assumptions above, define $\mathcal E_{\mathrm{gen}}([\theta_\star])
        :=
        \mathbb E_{X\sim \mathbb P_X}
        \bigl[
        (\bar f_{[\theta_\star]}(X)-f^\star(X))^2
        \bigr]$, and we have
    \begin{align} \label{eq:main_theorem}
        \mathcal E_{\mathrm{gen}}([\theta_\star])
        \le
        2\nu \varepsilon^2
        \left(
        \frac{\Lambda_\star^2 C_{\mathrm{sec}}^2}{\rho^2}
        \mathrm{Flat}^Q([\theta_\star])
        +
        L_\star^2
        \right)
        +
        4M^2 \delta.
    \end{align}
    The proof is provided in Appendix~\ref{app:proof-thm-general-quotient-flatness-implies-generalization}.
\end{theorem}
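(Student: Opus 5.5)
Write $\bar f:=\bar f_{[\theta_\star]}$ and $V:=\bigcup_i V_i$. The plan is to split the population error over the covered region $V$ and its complement, bound the covered part pointwise using interpolation and the local affine structure, sum over the disjoint $V_i$ with the balance condition, and then invoke Thm.~\ref{thm:general-quotient-flatness-implies-input-smoothness}.

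\textbf{Step 1 (uncovered mass).} Split
\[
\mathcal E_{\mathrm{gen}}=\mathbb E[(\bar f-f^\star)^2\mathbf 1_{V}]+\mathbb E[(\bar f-f^\star)^2\mathbf 1_{V^c}].
\]
Since $|\bar f-f^\star|\le 2M$ on $\mathcal X$ and $\mathbb P_X(V^c)\le\delta$, the second term is at most $4M^2\delta$.

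\textbf{Step 2 (pointwise bound on each cell).} Fix $x\in V_i\subset B(x_i,\varepsilon)$. I identify the labels with the target, $y_i=f^\star(x_i)$; this is implicit in the statement and I would make it explicit. Interpolation then gives $\bar f(x_i)=f^\star(x_i)$. By the local affine assumption,
\[
\bar f(x)=\bar f(x_i)+\langle\nabla_x\bar f(x_i),x-x_i\rangle .
\]
Hence
\[
|\bar f(x)-f^\star(x)|\le |\langle\nabla_x\bar f(x_i),x-x_i\rangle|+|f^\star(x_i)-f^\star(x)|\le \varepsilon\|\nabla_x\bar f(x_i)\|_2+L_\star\varepsilon .
\]
Applying $(a+b)^2\le 2a^2+2b^2$ yields
\[
(\bar f(x)-f^\star(x))^2\le 2\varepsilon^2\bigl(\|\nabla_x\bar f(x_i)\|_2^2+L_\star^2\bigr).
\]

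\textbf{Step 3 (summing over cells).} Because the $V_i$ are disjoint,
\[
\mathbb E[(\bar f-f^\star)^2\mathbf 1_V]=\sum_i\mathbb E[(\bar f-f^\star)^2\mathbf 1_{V_i}]\le\sum_i\mathbb P_X(V_i)\,2\varepsilon^2\bigl(\|\nabla_x\bar f(x_i)\|_2^2+L_\star^2\bigr).
\]
Using $\mathbb P_X(V_i)\le\nu/n$, this is at most
\[
2\nu\varepsilon^2\Bigl(\tfrac1n\sum_i\|\nabla_x\bar f(x_i)\|_2^2+L_\star^2\Bigr).
\]
The $L_\star^2$ term comes out as exactly $L_\star^2$ because $\sum_i \nu/n=\nu$. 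This factor $\nu$ is what matches the stated constant.

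\textbf{Step 4 (insert the smoothness bound).} Bound the empirical gradient average by Thm.~\ref{thm:general-quotient-flatness-implies-input-smoothness}:
\[
\tfrac1n\sum_i\|\nabla_x\bar f(x_i)\|_2^2\le\frac{\Lambda_\star^2C_{\mathrm{sec}}^2}{\rho^2}\,\mathrm{Flat}^Q([\theta_\star]).
\]
This uses $\rho\neq0$ and the section assumptions inherited from that theorem. Adding the bound from Step 1 gives \eqref{eq:main_theorem}.

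\textbf{Main obstacle.} The arithmetic is routine. The delicate points are two. First, the identification $y_i=f^\star(x_i)$ must be justified, since it is noiseless labels that let interpolation anchor $\bar f$ to $f^\star$ at the centers. Second, one must check that $V_i\subset B(x_i,\varepsilon)$ lies inside the affine region, so the first-order expansion is exact with no remainder term. Both of these are supplied by the hypotheses, so I would state them explicitly.
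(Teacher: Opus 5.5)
Your proposal is correct and follows essentially the same route as the paper's proof: split the error over the covered region and its complement, bound it pointwise on each $V_i$ via interpolation, local affinity, the Lipschitz bound and $(a+b)^2\le 2a^2+2b^2$, sum using $\mathbb P_X(V_i)\le \nu/n$, bound the uncovered part by $4M^2\delta$, and finish with Thm.~\ref{thm:general-quotient-flatness-implies-input-smoothness}. You are right to flag the noiseless-label identification $y_i=f^\star(x_i)$, since the paper uses it silently at the same step.
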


The balanced local covering condition requires most test mass to lie near the training samples without concentrating around only a few of them. This is plausible for locally clustered data, but restrictive when high-dimensional data are diffuse, making small neighborhoods around training samples low-mass. The Lipschitz condition on $f^\star$ controls target variation within neighborhoods, the local affinity assumption controls predictor variation via its input gradient, and the bound $M$ controls the uncovered mass of size at most $\delta$. Together with the quotient-flatness-to-input-smoothness estimate, these assumptions convert symmetry-aware flatness into a population generalization bound.

Thm.~\ref{thm:general-quotient-flatness-implies-generalization} completes the symmetry-aware pipeline from quotient flatness to generalization. The result separates the role of geometry, through the quotient flatness term, from the role of data coverage, through the local covering term.
The following corollary is immediate.

\begin{corollary}[From quotient stability to generalization]
\label{cor:general-stability-to-generalization}
Under the assumptions of Thms.~\ref{thm:general-ms-stability-implies-flatness} and~\ref{thm:general-quotient-flatness-implies-generalization},
\begin{align}
\mathcal E_{\mathrm{gen}}([\theta_\star])
\le
2\nu \varepsilon^2
\left(
\frac{2B}{\eta}
\frac{\Lambda_\star^2 C_{\mathrm{sec}}^2}{\rho^2}
+
L_\star^2
\right)
+
4M^2\delta.
\end{align}
The proof is provided in Appendix~\ref{app:proof-cor-general-stability-to-generalization}.
\end{corollary}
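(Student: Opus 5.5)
The corollary follows by substituting the stability bound into the generalization bound. Theorem~\ref{thm:general-quotient-flatness-implies-generalization} gives
\[
\mathcal E_{\mathrm{gen}}([\theta_\star])
\le
2\nu\varepsilon^2\Bigl(\frac{\Lambda_\star^2 C_{\mathrm{sec}}^2}{\rho^2}\mathrm{Flat}^Q([\theta_\star]) + L_\star^2\Bigr) + 4M^2\delta ,
\]
and the right-hand side is nondecreasing in $\mathrm{Flat}^Q([\theta_\star])$. This holds because the coefficient $2\nu\varepsilon^2\Lambda_\star^2C_{\mathrm{sec}}^2/\rho^2$ is nonnegative, given $\nu\ge 1$, $\varepsilon>0$ and $\rho\neq 0$.

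Under the assumptions of Theorem~\ref{thm:general-ms-stability-implies-flatness}, that is, one-step mean-square quotient linear stability at the same interpolation point $[\theta_\star]$ with the given $\eta$ and $B$, we have $\mathrm{Flat}^Q([\theta_\star])\le 2B/\eta$. Substituting this into the monotone bound above gives exactly the stated inequality. The factor $2B/\eta$ then appears multiplied by $\Lambda_\star^2C_{\mathrm{sec}}^2/\rho^2$ inside the parentheses, as written in the corollary.

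There is no real obstacle here. The only point to check is consistency: both theorems must refer to the same interpolation solution, the same metric $g^Q$ (so that the trace in $\mathrm{Flat}^Q$ is identical in both results), and the same section constant $C_{\mathrm{sec}}$ and factor $\Lambda_\star$. This holds because the corollary assumes the hypotheses of both theorems jointly.

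Note also that $\mathrm{Flat}^Q\ge 0$ by Proposition~\ref{prop:general-quotient-flatness-identity}, so the substitution involves no sign issues.
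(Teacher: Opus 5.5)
Your proposal is correct and is the same argument as the paper's: substitute $\mathrm{Flat}^Q([\theta_\star])\le 2B/\eta$ from Theorem~\ref{thm:general-ms-stability-implies-flatness} into the bound of Theorem~\ref{thm:general-quotient-flatness-implies-generalization}. Your remark that the right-hand side is nondecreasing in $\mathrm{Flat}^Q$, because its coefficient is nonnegative, is the one step the paper leaves implicit.
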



Fig.~\ref{fig:quotient-flatness-generalization-experiment} provides an empirical illustration of the generalization bound in
Thm.~\ref{thm:general-quotient-flatness-implies-generalization}. In the teacher--student two-layer ReLU setting, the empirical
generalization error remains below the theoretical right-hand side of the generalization bound, and the intermediate relation between
quotient flatness and input smoothness is also observed.


\begin{figure}[htbp]
  \centering
  \includegraphics[width=1.0\textwidth]{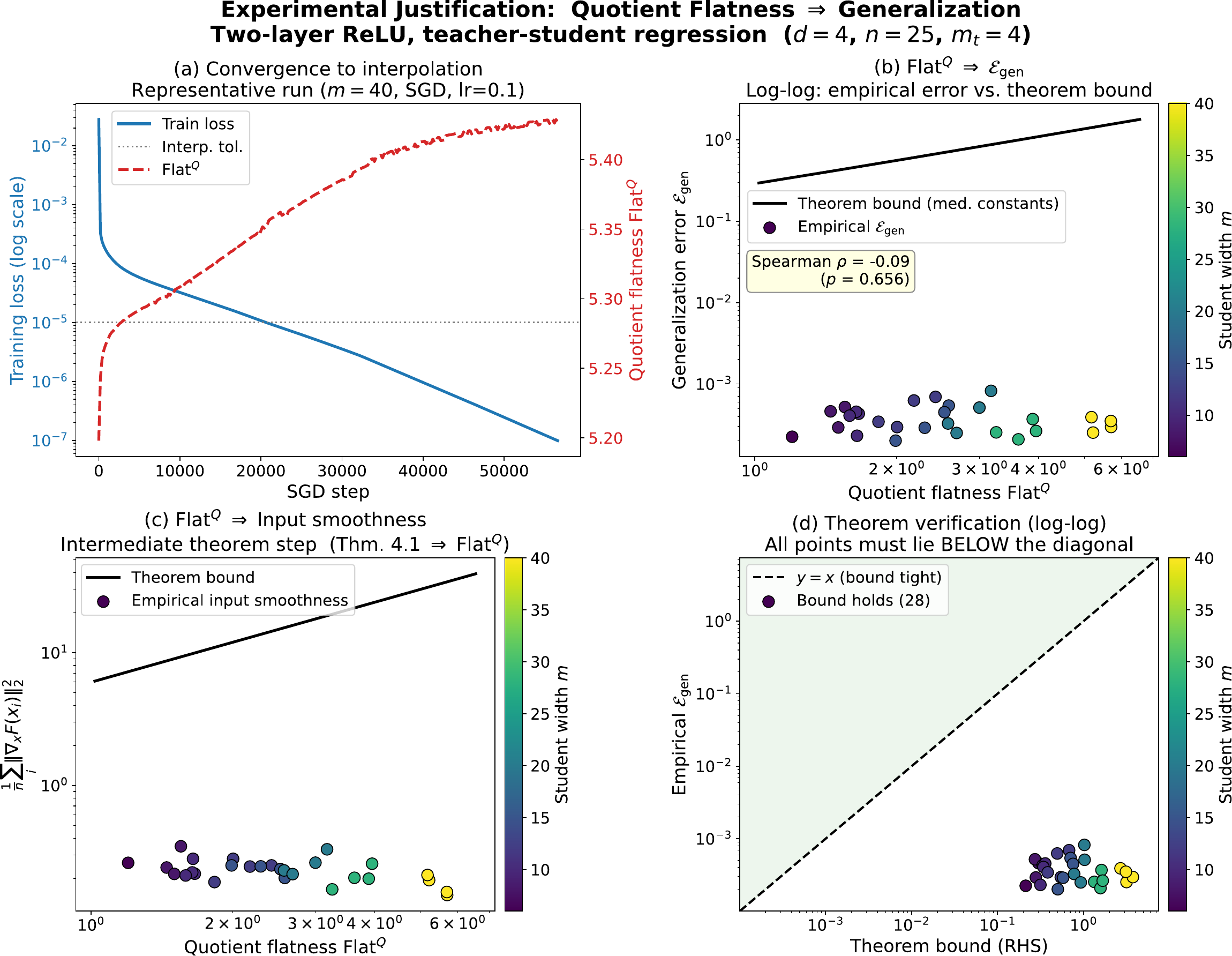}
  \caption{%
\textbf{Experimental verification of Thm.~\ref{thm:general-quotient-flatness-implies-generalization}.}
We test the quotient-flatness generalization bound in a teacher--student regression task with two-layer ReLU networks. In this two-layer specialization, the factor
$\Lambda_\star^2 C_{\mathrm{sec}}^2$ is evaluated by the explicit constant
$\kappa_\ast \max(1,A_\ast^2)$.
The teacher has input dimension $d=4$ and width $m_t=4$; the training set contains $n=25$ points sampled uniformly from $\mathcal{S}^{d-1}$.
Student networks of width $m \in \{6,8,10,12,15,20,28,40\}$ are trained by full-batch SGD with learning rate $0.1$ for up to $2\times10^5$ steps, until $\hat{\mathcal{L}} \le 5\times10^{-5}$.
The test distribution $P_X$ is uniform on $\bigcup_{i=1}^{n} B(x_i,\varepsilon)$ with $\varepsilon=0.15$, so the $(\varepsilon,\delta,\nu)$-balanced covering condition holds exactly with $\nu=1$ and $\delta=0$.
We use $4$ seeds per width; $28$ of $32$ runs reach interpolation.
\textbf{(a)}
For a representative run with $m=40$, the training loss decreases monotonically, while the quotient flatness $\mathrm{Flat}^Q$ increases, indicating increasing quotient-space curvature during fitting.
\textbf{(b)}
Generalization error $\mathcal{E}_\mathrm{gen}$ versus $\mathrm{Flat}^Q$ on a log--log scale.
Each point is one converged model, colored by $m$.
The solid curve is the bound~\eqref{eq:main_theorem} evaluated at median geometric constants.
Specifically, $\kappa_\ast=\|U_\ast\|_{\mathrm{op}}^2$, where the rows of $U_\ast$ are $u_r=w_r/\|w_r\|$; $A_\ast=\max_r |\alpha_r|$, with $\alpha_r=a_r\|w_r\|$; $\rho=\min_i\|x_i\|=1$; and $M$ uniformly bounds $|F_\theta(x)|$ and $|f_\ast(x)|$ on $\operatorname{supp}(P_X)$.
Empirically, $\kappa_\ast \in [3.6,14.5]$ with median $6.0$, $A_\ast \in [0.26,0.79]$ with median $0.56$, $\rho=1$, and $M\approx0.49$.
All points lie well below the bound.
\textbf{(c)}
Empirical input smoothness $\tfrac{1}{n}\sum_i \|\nabla_x F(x_i)\|^2$ versus $\mathrm{Flat}^Q$ on a log--log scale.
The bound $\kappa_\ast \max(1,A_\ast^2)\rho^{-2}\mathrm{Flat}^Q$ holds for all models, verifying the intermediate implication from quotient flatness to input smoothness.
\textbf{(d)}
Direct verification of the theorem bound: empirical $\mathcal{E}_\mathrm{gen}$ versus the theorem right-hand side on a log--log scale.
All $28$ converged models lie below the diagonal $y=x$, confirming the bound.
  }
  \label{fig:quotient-flatness-generalization-experiment}
\end{figure}

\section{Limitations} \label{sec:limitations}

Several extensions remain open.
First, cross-entropy introduces a curvature-residual term that breaks the exact interpolation identity used here.
Second, the global permutation quotient is naturally stratified, so extending the theory beyond the regular stratum may require orbifold or stratified geometry.
Finally, our empirical evidence remains limited: reproducing flatness--generalization phenomena in deep architectures is difficult and closely tied to the well-known \textit{flatness criticism}.
Although these issues are left unresolved, our theorems identify precisely where the new technical obstacles arise.

\begin{credits}
\subsubsection{\ackname} 
I thank Toshinori Araki, my manager at NEC Corporation, for his dedicated support.

\subsubsection{\discintname}
N/A
\end{credits}

%
%
\bibliographystyle{splncs04}
\bibliography{references}

\appendix
\section{Proofs}
\label{app:deferred-proofs}

\subsection{Proof of Prop.~\ref{prop:general-quotient-flatness-identity}}
\label{app:proof-prop-general-quotient-flatness-identity}



\begin{proof}
For each $i$, define
$
\bar f_i([\theta]) := \bar f_{[\theta]}(x_i),
\qquad
r_i([\theta]) := \bar f_i([\theta]) - y_i,
\qquad
\phi_i([\theta]) := \frac12 r_i([\theta])^2.
$
Then
$
\bar L = \frac1n \sum_{i=1}^n \phi_i.
$
Let $h:\mathbb R \to \mathbb R$ be given by $h(t)=\frac12 t^2$, so that $\phi_i=h\circ r_i$. By the Riemannian Hessian chain rule,
$
\mathrm{Hess}_{\mathcal M}\phi_i
=
h''(r_i)\,dr_i \otimes dr_i + h'(r_i)\,\mathrm{Hess}_{\mathcal M}r_i.
$
Since $h'(t)=t$ and $h''(t)=1$, this becomes
$
\mathrm{Hess}_{\mathcal M}\phi_i
=
dr_i \otimes dr_i + r_i\,\mathrm{Hess}_{\mathcal M}r_i.
$
Because $y_i$ is constant, we have $dr_i=d\bar f_i$ and $\mathrm{Hess}_{\mathcal M}r_i=\mathrm{Hess}_{\mathcal M}\bar f_i$. Hence
$
\mathrm{Hess}_{\mathcal M}\phi_i
=
d\bar f_i \otimes d\bar f_i + (\bar f_i-y_i)\,\mathrm{Hess}_{\mathcal M}\bar f_i.
$
Now let $[\theta_\star]$ be an interpolation solution. Then $\bar f_i([\theta_\star])=y_i$ for every $i$, so $r_i([\theta_\star])=0$. Therefore
$
\mathrm{Hess}_{\mathcal M}\phi_i([\theta_\star])
=
d\bar f_i|_{[\theta_\star]} \otimes d\bar f_i|_{[\theta_\star]}.
$
Let $\{e_\alpha\}_{\alpha=1}^{\dim \mathcal M}$ be a $g^Q$-orthonormal basis of $T_{[\theta_\star]}\mathcal M$. Taking the $g^Q$-trace gives
$
\mathrm{Tr}_{g^Q}\bigl(\mathrm{Hess}_{\mathcal M}\phi_i([\theta_\star])\bigr)
=
\sum_{\alpha=1}^{\dim \mathcal M}
\bigl(d\bar f_i|_{[\theta_\star]}(e_\alpha)\bigr)^2.
$
By the definition of the Riemannian gradient,
$
d\bar f_i|_{[\theta_\star]}(e_\alpha)
=
g^Q\bigl(\nabla^Q \bar f_i([\theta_\star]),e_\alpha\bigr).
$
Hence
$
\mathrm{Tr}_{g^Q}\bigl(\mathrm{Hess}_{\mathcal M}\phi_i([\theta_\star])\bigr)
=
\sum_{\alpha=1}^{\dim \mathcal M}
g^Q\bigl(\nabla^Q \bar f_i([\theta_\star]),e_\alpha\bigr)^2
=
\|\nabla^Q \bar f_i([\theta_\star])\|_{g^Q}^2
=
\|a_i^Q\|_{g^Q}^2.
$
Finally, by linearity of the Hessian and of the trace,
$
\mathrm{Flat}^Q([\theta_\star])
=
\mathrm{Tr}_{g^Q}\bigl(\mathrm{Hess}_{\mathcal M}\bar L([\theta_\star])\bigr)
=
\frac1n \sum_{i=1}^n
\mathrm{Tr}_{g^Q}\bigl(\mathrm{Hess}_{\mathcal M}\phi_i([\theta_\star])\bigr)
=
\frac1n \sum_{i=1}^n \|a_i^Q\|_{g^Q}^2.
$
This proves the claim.
\end{proof}

\subsection{Proof of Thm.~\ref{thm:general-quotient-flatness-implies-input-smoothness}} \label{app:proof-thm-general-quotient-flatness-implies-input-smoothness}

\begin{proof}

By the gauge-fixed representation, we may write
$s([\theta_\star]) = (W_{1,\star},\vartheta_\star)$ and
$f_{s([\theta_\star])}(x) = \tilde f(W_{1,\star}x,\vartheta_\star)$.
Let $z := W_{1,\star}x$ and define
$g_z(x) := \nabla_z \tilde f(z,\vartheta_\star)\vert_{z=W_{1,\star}x}$.
Then, the chain rule gives
$\nabla_x \bar f_{[\theta_\star]}(x) = W_{1,\star}^\top g_z(x)$,
hence
$\|\nabla_x \bar f_{[\theta_\star]}(x)\|_2
\le
\|W_{1,\star}\|_{\mathrm{op}} \|g_z(x)\|_2$.

On the other hand, differentiating $f_{(W_1,\vartheta_\star)}(x)=\tilde f(W_1x,\vartheta_\star)$ with respect to $W_1$ yields
$\nabla_{W_1} f_{(W_1,\vartheta_\star)}(x)=g_z(x)x^\top$.
Since this is a rank-one matrix, its Frobenius norm satisfies
$\|\nabla_{W_1} f_{(W_1,\vartheta_\star)}(x)\|_F
=
\|g_z(x)x^\top\|_F
=
\|g_z(x)\|_2 \|x\|_2$.
Therefore, for $x\neq 0$,
$\|g_z(x)\|_2
=
\|\nabla_{W_1} f_{(W_1,\vartheta_\star)}(x)\|_F / \|x\|_2$.
Combining the two estimates, we obtain
$\|\nabla_x \bar f_{[\theta_\star]}(x)\|_2
\le
(\|W_{1,\star}\|_{\mathrm{op}}/\|x\|_2)
\|\nabla_{W_1} f_{s([\theta_\star])}(x)\|_F$.
Applying this at $x=x_i$ proves the claimed inequality.
\end{proof}

\subsection{Proof of Thm.~\ref{thm:general-ms-stability-implies-flatness}}
\label{app:proof-thm-general-ms-stability-implies-flatness}


\begin{proof}
Expanding $\mathbb E\|M_{I_t}\xi\|^2 \le \|\xi\|^2$ yields the operator inequality
$
\mathbb E(M_{I_t}^\ast M_{I_t}) \preceq I.
$
Writing
$
A_t := \frac1B\sum_{b=1}^B H_{i_b}^Q,
$
we have $M_{I_t}=I-\eta A_t$, hence
$
0
\ge
-2\eta\,\mathbb E\,\mathrm{Tr}(A_t)
+
\eta^2 \mathbb E\,\mathrm{Tr}(A_t^2).
$
Now,
$
\mathbb E\,\mathrm{Tr}(A_t)
=
\mathrm{Tr}\left(\frac1n\sum_{i=1}^n H_i^Q\right)
=
\frac1n\sum_{i=1}^n \|a_i^Q\|^2
=
\mathrm{Flat}^Q([\theta_\star]).
$
Also,
$
\mathrm{Tr}(A_t^2)
=
\frac1{B^2}\sum_{b,c=1}^B \langle a_{i_b}^Q,a_{i_c}^Q\rangle^2
\ge
\frac1{B^2}\sum_{b=1}^B \|a_{i_b}^Q\|^4.
$
Taking expectation and using Cauchy--Schwarz gives
$
\mathbb E\,\mathrm{Tr}(A_t^2)
\ge
\frac1B
\left(
\frac1n\sum_{i=1}^n \|a_i^Q\|^2
\right)^2
=
\frac1B
\bigl(\mathrm{Flat}^Q([\theta_\star])\bigr)^2.
$
Substituting into the previous inequality yields
$
2\,\mathrm{Flat}^Q([\theta_\star])
\ge
\eta \frac1B \bigl(\mathrm{Flat}^Q([\theta_\star])\bigr)^2.
$
If $\mathrm{Flat}^Q([\theta_\star])=0$, the claim is trivial. Otherwise divide by $\mathrm{Flat}^Q([\theta_\star])$ to conclude
$\mathrm{Flat}^Q([\theta_\star]) \le \frac{2B}{\eta}$.

\end{proof}



\subsection{Proof of Prop.~\ref{prop:quotient-tensor-recursion}}
\label{app:proof-prop-quotient-tensor-recursion}


\begin{proof}
Since $M_{I_t}$ is independent of $\xi_t$,
$
M_{t+1}^{(k)}
=
\mathbb E[(M_{I_t}\xi_t)^{\otimes k}]
=
\mathbb E[M_{I_t}^{\otimes k}]\,\mathbb E[\xi_t^{\otimes k}]
=
\mathcal T_k M_t^{(k)}.
$
\end{proof}

\subsection{Proof of Thm.~\ref{thm:sharp-quotient-moment-bound}}
\label{app:proof-thm-sharp-quotient-moment-bound}


\begin{proof}
    Fix an arbitrary basis vector $e_j$ and write
 $
    z_i
    :=
    \langle a_i^Q,e_j\rangle_{g^Q},
    \qquad
    X_i
    :=
    z_i^2
    \ge 0 .
 $ 
 For a mini-batch $I=(i_1,\ldots,i_B)$ sampled with replacement, define
    \begin{align}
    Z_I
    :=
    \frac{1}{B}
    \sum_{b=1}^B
    X_{i_b}
    =
    \frac{1}{B}
    \sum_{b=1}^B
    \langle a_{i_b}^Q,e_j\rangle_{g^Q}^2 .
    \end{align}
    Since
$
    A_I
    =
    \frac{1}{B}
    \sum_{b=1}^B
    a_{i_b}^Q \otimes a_{i_b}^Q ,
$
    we have
$
    \langle A_I e_j,e_j\rangle_{g^Q}
    =
    Z_I .
$
    Therefore,
$
    \langle M_I e_j,e_j\rangle_{g^Q}
    =
    \langle (\mathrm{Id}-\eta A_I)e_j,e_j\rangle_{g^Q}
    =
    1-\eta Z_I .
$
    By Cauchy--Schwarz,
    \begin{align}
    |1-\eta Z_I|
    =
    |\langle M_I e_j,e_j\rangle_{g^Q}|
    \le
    \|M_I e_j\|_{g^Q}\|e_j\|_{g^Q}
    =
    \|M_I e_j\|_{g^Q}.
    \end{align}
    Since $\|e_j\|_{g^Q}=1$, the one-step $2k$-mean stability assumption gives
 $
    \mathbb E_I |1-\eta Z_I|^{2k}
    \le
    \mathbb E_I \|M_I e_j\|_{g^Q}^{2k}
    \le
    1 .
$
    We now bound the $k$-th moment of $Z_I$. By the triangle inequality in
    $L^{2k}$,
    \begin{align}
    \|\eta Z_I\|_{L^{2k}}
    &=
    \|1-(1-\eta Z_I)\|_{L^{2k}}  \le
    \|1\|_{L^{2k}}
    +
    \|1-\eta Z_I\|_{L^{2k}}  \le
    1+1
    =
    2 .
    \end{align}
    Hence
$
    \mathbb E_I Z_I^{2k}
    \le
    \left(\frac{2}{\eta}\right)^{2k}.
$
    By Lyapunov's inequality,
$
    \mathbb E_I Z_I^k
    \le
    \left(
    \mathbb E_I Z_I^{2k}
    \right)^{1/2}
    \le
    \left(\frac{2}{\eta}\right)^k .
$
    
    On the other hand, since all $X_i$ are nonnegative,
$
    Z_I^k
    =
    \left(
    \frac{1}{B}
    \sum_{b=1}^B
    X_{i_b}
    \right)^k
    \ge
    \frac{1}{B^k}
    \sum_{b=1}^B
    X_{i_b}^k .
$
    Taking expectation over the mini-batch sampled with replacement yields
    \begin{align}
    \mathbb E_I Z_I^k
    &\ge
    \frac{1}{B^k}
    \sum_{b=1}^B
    \mathbb E_{i_b} X_{i_b}^k  =
    \frac{1}{B^k}
    B
    \frac{1}{n}
    \sum_{i=1}^n
    X_i^k  
    =
    B^{1-k}
    \frac{1}{n}
    \sum_{i=1}^n
    |\langle a_i^Q,e_j\rangle_{g^Q}|^{2k}.
    \end{align}
    Combining the upper and lower bounds on $\mathbb E_I Z_I^k$, we obtain
 $ 
    B^{1-k}
    \frac{1}{n}
    \sum_{i=1}^n
    |\langle a_i^Q,e_j\rangle_{g^Q}|^{2k}
    \le
    \left(\frac{2}{\eta}\right)^k .
$
    Therefore,
$
    \frac{1}{n}
    \sum_{i=1}^n
    |\langle a_i^Q,e_j\rangle_{g^Q}|^{2k}
    \le
    \frac{2^k B^{k-1}}{\eta^k}.
$
    This proves the coordinate-wise bound.
    
    Summing the coordinate-wise bound over $j=1,\ldots,d_Q$ gives
    \begin{align}
    \frac{1}{n}
    \sum_{i=1}^n
    \|a_i^Q\|_{\ell_{2k}(e)}^{2k}
    &=
    \frac{1}{n}
    \sum_{i=1}^n
    \sum_{j=1}^{d_Q}
    |\langle a_i^Q,e_j\rangle_{g^Q}|^{2k}  \le
    d_Q
    \frac{2^k B^{k-1}}{\eta^k}.
    \end{align}
    Finally, for the Riemannian norm,
    \begin{align}
    \|v\|_{g^Q}^{2k}
    =
    \left(
    \sum_{j=1}^{d_Q}
    |\langle v,e_j\rangle_{g^Q}|^2
    \right)^k
    \le
    d_Q^{k-1}
    \sum_{j=1}^{d_Q}
    |\langle v,e_j\rangle_{g^Q}|^{2k}
    =
    d_Q^{k-1}
    \|v\|_{\ell_{2k}(e)}^{2k}.
    \end{align}
    Applying this to $v=a_i^Q$ and averaging over $i$ gives
 $
    \frac{1}{n}
    \sum_{i=1}^n
    \|a_i^Q\|_{g^Q}^{2k}
    \le
    d_Q^k
    \frac{2^k B^{k-1}}{\eta^k}.
$
    The proof is complete.
\end{proof}

\subsection{Proof of Cor.~\ref{cor:general-higher-order-input-smoothness}}
\label{app:proof-cor-general-higher-order-input-smoothness}


\begin{proof}
    By the pointwise estimate used in Theorem 4.1, for every training point $x_i$,
$
    \|\nabla_x \bar f_{[\theta_\star]}(x_i)\|_2
    \le
    \frac{\Lambda_\star C_{\mathrm{sec}}}{\rho}
    \|a_i^Q\|_{g^Q}.
$
    Raising both sides to the power $2k$ and averaging over $i$ gives
$
    \frac{1}{n}
    \sum_{i=1}^n
    \|\nabla_x \bar f_{[\theta_\star]}(x_i)\|_2^{2k}
    \le
    \left(
    \frac{\Lambda_\star C_{\mathrm{sec}}}{\rho}
    \right)^{2k}
    \frac{1}{n}
    \sum_{i=1}^n
    \|a_i^Q\|_{g^Q}^{2k}.
$
    Applying the Riemannian-norm version of the higher-order quotient moment
    bound yields the claim.
\end{proof}

\subsection{Proof of Thm.~\ref{thm:general-quotient-flatness-implies-generalization}}
\label{app:proof-thm-general-quotient-flatness-implies-generalization}


\begin{proof}
    For $x\in V_i$, by local affinity,
    $
    \bar f_{[\theta_\star]}(x)
    =
    \bar f_{[\theta_\star]}(x_i)
    +
    \nabla_x \bar f_{[\theta_\star]}(x_i)^\top (x-x_i).
    $
    Since $[\theta_\star]$ is an interpolation solution,
    $
    \bar f_{[\theta_\star]}(x_i)=y_i=f^\star(x_i).
    $
    Thus
    \begin{align}
    \bar f_{[\theta_\star]}(x)-f^\star(x)
    =
    \nabla_x \bar f_{[\theta_\star]}(x_i)^\top (x-x_i)
    +
    f^\star(x_i)-f^\star(x).
    \end{align}
    Using the Lipschitz continuity of $f^\star$ and $(a+b)^2 \le 2a^2+2b^2$, we obtain
$
    (\bar f_{[\theta_\star]}(x)-f^\star(x))^2
    \le
    2\varepsilon^2
    \left(
    \|\nabla_x \bar f_{[\theta_\star]}(x_i)\|_2^2
    +
    L_\star^2
    \right).
$
    Integrating over $V_i$, summing over $i$, and using $\mathbb P_X(V_i)\le \nu/n$ gives
    \begin{align}
    \int_{\cup_i V_i}
    (\bar f_{[\theta_\star]}(x)-f^\star(x))^2 d\mathbb P_X(x)
    \le
    2\nu\varepsilon^2
    \left(
    \frac1n\sum_{i=1}^n \|\nabla_x \bar f_{[\theta_\star]}(x_i)\|_2^2
    +
    L_\star^2
    \right).
    \end{align}
    On the complement, the uniform bound yields
    $
    \int_{\mathcal X\setminus \cup_i V_i}
    (\bar f_{[\theta_\star]}(x)-f^\star(x))^2 d\mathbb P_X(x)
    \le
    4M^2\delta.
    $
    Combining the two inequalities and applying Thm.~\ref{thm:general-quotient-flatness-implies-input-smoothness} proves the claim.
\end{proof}

\subsection{Proof of Cor.~\ref{cor:general-stability-to-generalization}}
\label{app:proof-cor-general-stability-to-generalization}


\begin{proof}
Substitute the quotient flatness bound from Thm.~\ref{thm:general-ms-stability-implies-flatness} into Thm.~\ref{thm:general-quotient-flatness-implies-generalization}.
\end{proof}

\end{document}